\newtheorem*{note*}{Note}
\newtheorem*{theorem*}{Theorem}
\newtheorem*{objective*}{PBC objective}
\newtheorem*{tankobjective*}{Sampled PBC objective}
\begin{document}

\title{\LARGE \bf
Learning passive policies with virtual energy tanks in robotics  
}

\author{
Riccardo Zanella\textsuperscript{1}, 
Gianluca Palli\textsuperscript{1},
Stefano Stramigioli\textsuperscript{2},
Federico Califano\textsuperscript{2}
\thanks{\textsuperscript{1} Department of Electrical, Electronic and Information Engineering, University of Bologna, Viale Risorgimento 2, 40136 Bologna, Italy. Corresponding author email:
\textit{riccardo.zanella2@unibo.it} }
\thanks{\textsuperscript{2} Robotics \& Mechatronics (RaM) group, University of Twente, The Netherlands.}
\thanks{This work was supported by the European Commission’s Horizon Europe Programme with the project IntelliMan - AI-Powered Manipulation System for Advanced Robotic Service, Manufacturing and Prosthetics - under grant agreement No 101070136.}
}%
%
%

\markboth{IEEE Robotics and Automation Letters.}%
{Zanella \MakeLowercase{\textit{et al.}}: Learning passive control policies}
 
\maketitle

\begin{abstract}
Within a robotic context, we merge the techniques of \textit{passivity-based control} (PBC) and \textit{reinforcement learning} (RL) with the goal of eliminating some of their reciprocal weaknesses, as well as inducing novel promising features in the resulting framework.
We frame our contribution in a scenario where PBC is implemented by means of \textit{virtual energy tanks}, a control technique developed to achieve closed-loop passivity for any arbitrary control input. Albeit the latter result is heavily used, we discuss why its practical application at its current stage remains rather limited, which makes contact with the highly debated claim that passivity-based techniques are associated with a loss of performance. The use of RL allows us to learn a control policy that can be passivized using the energy tank architecture, combining the versatility of learning approaches and the system theoretic properties which can be inferred due to the energy tanks. Simulations show the validity of the approach, as well as novel interesting research directions in energy-aware robotics.
\end{abstract}


\section{Introduction}

Robotics is increasingly focusing on the development of control frameworks allowing the transition from industrial cages to unstructured environments. This transition carries the ambitious objective of stable and safe execution of complex tasks where robots coexist with other robots or possibly humans. The practical impossibility of careful dynamic modeling of the environment along those tasks makes the stability objective even more challenging, and model-based approaches poorly suitable.

Passive controllers have been presented as a feasible solution to tackle this problem as the stability of the closed-loop system is in principle independent on the external environmental interaction \cite{Stramigioli2015Energy-AwareRobotics,vanderSchaftL2}. Passive systems route physical energy rather than producing it, which is the ultimate reason why stability proofs (using energy functions as Lyapunov candidates) are conveniently assessed in this framework \cite{Duindam2009ModelingSystems,Ortega2004InterconnectionSurvey}. A powerful technique allowing the passivization of any control action is represented by \textit{virtual energy tanks} \cite{Califano2022OnSystems,Secchi2006PositionTelemanipulation,Duindam2004Port-basedSystems}. These are able to store information about physical energy flows undergoing the system, and introduce at a control level the scalar information representing the energy budget for the controlled robot. A conditional check on this budget with proper passive reaction strategies in case of depletion of the latter is what guarantees closed-loop passivity. The main drawback of passivity-based control methods (comprising those involving energy tanks) is the lack of optimization over a task performance metric along the design of the controller \cite{7384497}. This fact led to the claim that passivity-based control methods are associated with a loss of performance, which becomes more severe as the complexity of tasks increases.

On the other side of the control theoretic spectrum, recent advancements in the machine learning community are leading to robots with an outstanding awareness of complex environments and tasks. The intelligence encoded in the control policies, normally optimized by means of performance-based metrics, reflects complex high-level decision-making strategies which are learned thanks to the availability of huge datasets \cite{Zeng2018, Singh2019, Zhu2020}. The drawback of these families of approaches is the difficulty in guaranteeing system theoretic properties such as passivity and stability for the controlled system \cite{Busoniu2018ReinforcementApproximators}. 

In this work we merge the passivity-based control (PBC) design involving energy tanks and reinforcement learning framework (RL), combining system theoretic properties of the closed-loop system induced by the ultimately passive design, and high-performance achievement peculiar of the RL framework.
This procedure presented both in inference and in training, allows meaningful scale PBC to tasks requiring the learning of complex control policies.

\subsubsection*{Related work}
We recognize related work casting the energy tank algorithm (seen as a task-agnostic passivization tool) into a framework using task-based information for performance augmentation of the underlying passive system. In \cite{Shahriari2018Valve-basedObjectives} the authors introduce the so-called \textit{valve-based energy tanks} (also used in e.g., \cite{Benzi2022AdaptiveEstimation}), in which extra parameters are introduced on the energy tanks in order to embed control objectives in the design, beyond achieving passivity. Task specifications are translated into tank design rules by controlling the power flows undergoing the system. The idea of using a reference power trajectory in combination with energy tanks is present also in \cite{Shahriari2020PowerTanks,califano22}, where task-based specifications are used to tune the tank parameters. These works present very stiff task specifications and badly scale to complex tasks in which high-level policies need to be learned. 
A family of methods using energy tanks and sharing a similar motivation of this work is \cite{benzi22,Capelli2022PassivityEnergy}, where an explicit optimization problem is introduced to find the closest passive approximation of a given control action. The idea is to exploit the versatility of energy tank architecture to perform an optimization that can be generalized and scaled for different tasks and whose outcome is a passively controlled system. An important difference between this work and the cited ones (beyond the specific technique to solve the optimization) is that in \cite{benzi22,Capelli2022PassivityEnergy} the desired control action is already given, and the optimization aims at finding the closest passive approximation to it, imposed by a non-depletion constraint of the tank. We will comment on the difference between the approaches, which involves the degree of freedom represented by the initial state in the tank.

The paper is organized as follows. In Sec. \ref{sec:background} a throughout explanation of the PBC problem in robotics and its achievement using energy tanks is reviewed, followed by a discussion regarding the limitations of the methodology. In Sec. \ref{sec:framework} the proposed scheme is presented, and validated in Sec. \ref{sec:simulations} by means of different simulations. Conclusions and future works are sketched in Sec. \ref{sec:Conclusion}.

\section{Background and Motivation}
\label{sec:background}
We review, restricted to robotics, the motivation underlying PBC, and how energy tanks formally solve the problem of passivizing an arbitrary control input. We then discuss the limits of the approach, which are tackled with the subsequent RL integration.

\subsection{Passivity-based control and energy tanks}
Consider the dynamic  model of an $n$-DoF robot in Lagrangian joint-space coordinates:
\begin{equation}
\label{eq:robot}
    M(q)\ddot{q}+C(q,\dot{q})\dot{q}+D(q)\dot{q}+\frac{\partial V(q)}{\partial q}=\tau
\end{equation}
where $q\in \mathbb{R}^n$ are the joint coordinates, $M(q)=M^\top(q)>0$ is the inertia tensor, $C(q,\dot{q})$ is the matrix collecting centrifugal and Coriolis terms, $D(q)=D^\top(q)\geq0$ is the matrix collecting friction coefficients, $V: \mathbb{R}^n \to \mathbb{R}$ maps the joint coordinates into the total conservative potentials (e.g., gravity and elastic springs), and $\tau \in \mathbb{R}^n$ are the generalized forces at the joints, collocated with the joint coordinates $q$.
The time derivative of total mechanical energy $E(q,\dot{q})=\frac{1}{2}\dot{q}^\top M(q)\dot{q}+V(q)$ verifies the inequality
\begin{equation}
\label{eq:passivity}
    \dot{E}= \dot{q}^\top\tau-\dot{q}^\top D(q) \dot{q}\leq \dot{q}^\top\tau
\end{equation}
which is a statement of passivity for system (\ref{eq:robot}) with the energy function $E(q,\dot{q})$ as \textit{storage function}. Inequality (\ref{eq:passivity}) states that system (\ref{eq:robot}) is passive with respect to its energy function $E(q,\dot{q})$, and its so-called \textit{power port} $(\tau,\dot{q})$, an input-output pair whose pairing $\dot{q}^\top \tau$ produces the scalar power flow undergoing the system. 
Once (\ref{eq:passivity}) is integrated in time, the left-hand side of the equation $E(t)-E(0)$ represents the stored energy in the system, which is always less or equal than the supplied energy through the power port, represented by the right term $\int_0^t \dot{q}(s)^\top\tau(s) ds$. The positive dissipated energy $\int_0^t \dot{q}(s)^\top D(q(s)) \dot{q}(s) ds$ determines the convergence rate to the equilibrium, and acts as a passivity margin, i.e., the higher it is, the bigger is the margin for which the passivity inequality (\ref{eq:passivity}) results satisfied\footnote{If $D=0$, (\ref{eq:passivity}) holds with equal sign and the system is denoted as \textit{lossless}.}. If applied to an (open-loop) physical system like (\ref{eq:robot}), passivity just represents the thermodynamic statement of energy conservation, while if applied to a controlled system, passivity implies stability for the minimum of the storage function under weak conditions qualifying the latter as a Lyapunov function for the equilibrium \cite{vanderSchaftL2,Ortega2004InterconnectionSurvey}.
Next, we illustrate the full motivation behind the necessity of achieving a passive closed-loop system.

\subsubsection*{Passivity as must}
To understand the motivation behind a passive design consider system (\ref{eq:robot}) as an \textit{open} one, i.e., a system that, independently on the way it is controlled, can interact with a new, dynamically unknown system, that we call the \textit{s-environment}\footnote{We avoid calling this system simply "environment", since the latter will refer to the concept of environment in the RL framework.}. This is shown by considering system (\ref{eq:robot}) in which we distinguish the torque contributions in two terms, i.e.,
\begin{equation}
\label{eq:torqueDist}
\tau=\tau_c+\tau_e,
\end{equation}
where $\tau_c$ are the control torques that can be directly applied at the joints by means of the actuators, while $\tau_e=J^\top(q)F$ corresponds to the torques at the joints produced by an external interaction at the end-effector of the robot. Here $F\in \mathbb{R}^6$ is a vector collecting forces and torques at the end-effector in the workspace and $J(q)$ is the Jacobian matrix of the robot. The latter relates joint-space and work-space velocities and forces as the dual relations $\dot{x}=J(q)\dot{q}$ and $\tau_e=J^\top(q)F$, where $\dot{x}\in \mathbb{R}^6$ is the rate of work-space coordinates $x$, collocated with the forces $F$, resulting in $\tau_e^\top \dot{q}=F^\top \dot{x}$.
Specializing the power balance (\ref{eq:passivity}) to the uncontrolled system (\ref{eq:robot}) with the split (\ref{eq:torqueDist}) one obtains
\begin{equation}
\label{eq:dissipationsplit}
    \dot{E}=\dot{q}^\top\tau_c+\dot{x}^\top F -\dot{q}^\top D(q) \dot{q}\leq \dot{q}^\top\tau_c +\dot{x}^\top F,
\end{equation}
i.e., the robotic system (\ref{eq:robot}) is passive with respect to two different power ports: $(\tau_c, \dot{q})$, whose pairing is the power flow undergoing the actuators, and $(F,\dot{x})$, whose pairing is the power flow undergoing the s-environment, an external dynamical system with its own (unknown) dynamics. 
In this context, the PBC objective can be described as follows

\begin{objective*}
Given the system (\ref{eq:robot}) with the split (\ref{eq:torqueDist}) inducing power balance (\ref{eq:dissipationsplit}), design a control law on the port $(\tau_c,\dot{q})$, possibly by designing a controller as a dynamical system in its own right, with state $x_c$ and energy $V_c(x_c)$, such that for the closed-loop storage function $\mathcal{E}(q,\dot{q},x_c)=E(q,\dot{q})+V_c(x_c)$, the passivity condition
\begin{equation}
\label{eq:passivityEnv}
    \dot{\mathcal{E}}\leq\dot{x}^\top F
\end{equation}
is verified, i.e., that the controlled system is passive with respect to the s-environment port $(F,\dot{x})$.
\end{objective*}

This objective is considered of utmost importance (from which the \textit{passivity as must} claim \cite{Stramigioli2015Energy-AwareRobotics}) since, when an interaction takes place ($F\neq 0$) the dynamics of the closed-loop system deeply depends on the dynamics of the s-environment, which cannot be modeled in an oversimplified way, e.g., assuming linearity. This claim is particularly valid if the task to be executed takes place in an unstructured, hazardous scenario where sources of disturbances like unforeseen interactions with humans are possible. 

The passive design (\ref{eq:passivityEnv}) aims at \textit{excluding the possibility that in the moment the robot interacts with a passive s-environment, an unbounded amount of energy can be produced during the interaction}.\footnote{We refer to \cite{Stramigioli2015Energy-AwareRobotics} for a converse proof: if PBC is not achieved, there exists a passive s-environment destabilizing the controlled system.} This property is referred to as \textit{contact stability}. For completeness, we report that passivity induces also a stronger property than contact stability, which is sometimes referred to as \textit{robust stability} \cite{benzi22}: even if the s-environment is not a passive system, contact stability can be assured if the passivity margin of the controlled system covers the energy generated by the active s-environment. All these properties follow from the fact that passive systems are closed under power-preserving interconnection \cite{vanderSchaftL2,Duindam2009ModelingSystems}, and we refer to \cite{vanderSchaftL2} for further relations between passivity and different system theoretic stability properties. To conclude, in this work we are aligned with the necessity of achieving the PBC objective when a robotic task needs to be performed.
\subsubsection*{Energy Tanks}
An extremely effective method to achieve the PBC objective relies on the energy tank control algorithm, which we describe next and whose basic schematics is represented in Fig. \ref{fig:energy_tank}.
\begin{figure}
    \centering
     \includegraphics[width=0.65\columnwidth]{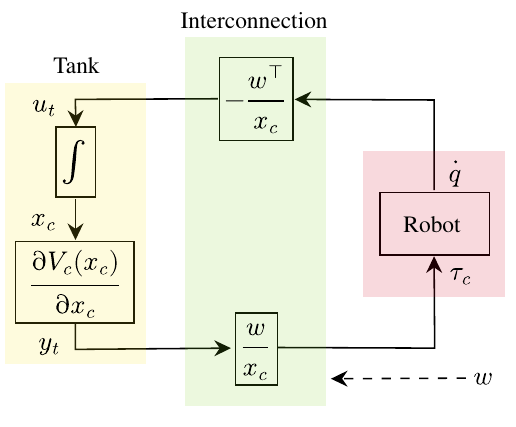}
    \caption{The essential of the energy tank algorithm.}
    \label{fig:energy_tank}
     \vspace{-3mm}
\end{figure}
Mathematically an energy tank is a dynamical system that constitutes an atomic energy-storing element. It can be represented as the (lossless) $1$D system:
\begin{align}
\label{eq:tank1}
    \dot{x}_c&=u_{\textrm{t}}  \\
    \label{eq:tank2}
        y_{\textrm{t}}&= \frac{\partial V_c(x_c)}{\partial x_c},
\end{align}
where the energy function is simply $V_c(x_c)=\frac{1}{2}x_c^2$, and as a consequence it presents a power port $(u_{\textrm{t}},y_{\textrm{t}})$ since $\dot{V}_c=y_{\textrm{t}} u_{\textrm{t}}$. The tank (\ref{eq:tank1}-\ref{eq:tank2}) is interconnected to  (\ref{eq:robot}) in a way that allows the implementation of some control action that fulfills the execution of some task, and at the same time meeting the desired passivity constraint. This is possible thanks to a suitable power-preserving interconnection between the two systems, defined as follows:

\begin{equation}
\label{eq:interconnection}
    \begin{pmatrix} \tau_c \\ u_{\textrm{t}} \end{pmatrix} = \begin{bmatrix} 0 & \frac{w}{x_c} \\ -\frac{w^\top}{x_c} & 0\end{bmatrix} \begin{pmatrix} \dot{q} \\ y_{\textrm{t}} \end{pmatrix},
\end{equation}
where $w\in \mathbb{R}^n$ is the desired task-dependent control action to be passively implemented.
This interconnection, which is the core of the energy tank algorithm, produces two effects: i) the desired action $w$ is correctly implemented, i.e., from the side of the robot (\ref{eq:robot}), one obtains
\begin{equation}
    M(q)\ddot{q}+C(q,\dot{q})\dot{q}+D(q)\dot{q}+\frac{\partial V(q)}{\partial q}=w+J^\top(q)F;
\end{equation}
and ii) at every time the mechanical power exiting the robot produced by the actuators $\dot{q}^{\top} \tau_c$ leaves the energy tank since
\begin{equation}
\label{eq:tankBalance}
    \dot{V}_c=y_{\textrm{t}} u_{\textrm{t}}=-w^\top \dot{q}=-\dot{q}^\top w=-\dot{q}^\top \tau_c.
\end{equation}
Evaluating the variation of the closed-loop storage function $\mathcal{E}=E+V_c$ one achieves the PBC objective (\ref{eq:passivityEnv}), which shows that the interconnection achieves indeed a passive closed-loop system with respect to the s-environmental port.
The scalar $V_c(x_c)$ indicates the amount of energy that is still at disposal of the control mechanism implementing the action $w$ before losing the described formal passivity. In fact, the interconnection (\ref{eq:interconnection}), which is modulated by the tank state, becomes singular when $x_c$, and thus $V_c(x_c)$ is zero, representing the moment in which it becomes impossible to passively perform the desired action $w$. To meaningfully implement the energy tank algorithm, it is thus necessary to constantly observe the energy in the tank in order to implement a control action $\Bar{w}$, rather than $w$, where

\begin{equation}
     \Bar{w}= 
\begin{cases}
    w,& \text{if } V_c(x_c)\geq \epsilon \geq 0\\
    0,& \text{if } V_c(x_c)\leq \epsilon ,
\end{cases}
\label{eq:tanksCheck}
\end{equation}
for some small positive energy level $\epsilon$. In this way, formal passivity of the closed-loop system is recovered completely since the system is just detached from the controller at the moment in which no energy budget is left.

Before discussing the peculiarities and limitations of the described algorithm, we introduce the concept of \textit{task energy}, that we will denote by $e^*$, firstly defined in  \cite{Schindlbeck2015UnifiedTanks}, which is the minimum energy in the tank necessary to fulfill passively some task. In other words, if $V_c(x_c(t))|_{t=0}>e^*$, the tank never depletes along the whole task horizon. It is important to notice that $e^*$ does not depend on the task only, but also on the specific tank dynamics which is chosen: the choice $u_\textrm{t}$ in (\ref{eq:interconnection}) is not unique, and many variations have been proposed \cite{Califano2022OnSystems} (e.g., recirculation in the tank of dissipation) which would change the value of task energy for the same task. 
\subsubsection*{Comments and limitations} 
i) The energy tank algorithm constitutes a clever way to disembody from any physical dynamics the implementation of a passive control action. In fact, there is no need to design the controller in a way that it mimics a physical system (e.g., in the PD case the controller can be seen as a control spring and damper): as long as a task is achieved through a control input $w$ and $e^*$ is finite, a passive implementation is possible by means of the energy tank algorithm, just setting $V_c(x_c(t))|_{t=0}>e^*$; ii) The aforementioned initial energy assignment is a degree of freedom in the algorithm, whose implications are often naively addressed. In fact, a very high (yet bounded) energy initialization in the tank would technically still fulfill the PBC objective (\ref{eq:passivityEnv}), yet creating so-called "practically unstable behaviors" \cite{benzi22,Califano2022OnSystems}. As a consequence, a naive tank design \textit{de facto} makes robust stability a property that is not connected to any safety guarantee. In fact, notice that till the energy in the tank is not depleted, the control input $w$ is completely transparent to the tank algorithm, which reduces to a trick to formally prove passivity, with limited significance in the context in which tasks need to be performed in unstructured scenarios. The fact that the mechanical power flow is undefined in sign (i.e., when $\dot{q}^\top \tau$ the tank gets refilled) worsens this criticality, which is sometimes addressed with empirical tank saturation arguments.

To conclude the discussion, both the task energy $e^*$ and the control action $w$ are often difficult to be determined a priori, and these are not independent variables. For complex task executions, it is reasonable to take advantage of simulations to optimize for both variables \textit{in a combined way}. If the PBC objective can be naively achieved just by initializing the energy in the tank to a sufficiently high value, what we claim to be a useful system theoretic property in the energy tank context is the achievement of the PBC objective \textit{combined} with an estimation of the task energy. In fact, the knowledge of the latter would lead to a meaningful energy tank initialization, so that a depletion of the tank can be used as a diagnostic tool to detect an important divergence from nominal task execution, beyond formally achieving a passive closed-loop system.

\subsection{Reinforcement learning} 
Reinforcement learning (RL) \cite{sutton2018reinforcement} is a model-free framework, consisting of an agent interacting with an environment, to solve optimal control problems stated as Markov decision processes (MDPs). MDPs are a mathematical formulation of decision-making in situations where outcomes are partly stochastic and partly under the control of a decision-maker.
 A MDP is defined by the tuple $(\mathcal{S},\mathcal{A}, p, r)$:  at any given state $s_k\in\mathcal{S}$ at time step $k$, the agent chooses and executes an action $a_k\in \mathcal{A}$ according to a learnt policy $\pi(a_k|s_k)$, then the environment transitions to a new state $s_{k+1}\in \mathcal{S}$ with the unknown state transition probability $p(s_{k+1}|s_k,a_k)$ and produces a reward $r_k = r(s_k, a_k)$. An important condition that characterizes the MDPs is that the state transition probability satisfies the Markov property $p(s_{k+1}|s_k,a_k)=p(s_{k+1}|s_k,a_k,\dots, s_1,a_1)$ for any trajectory $s_1,a_1,\dots,s_k,a_k$, which means that the environment is memoryless because the transition to the next state depends only on the current state and action. 
The Markov property is usually fundamental to guarantee the convergence to the optimal policy in RL algorithms   \cite{sutton2018reinforcement}.
The goal of an RL algorithm is to find an optimal policy $\pi^*$ that maximizes the expected $\gamma$-discounted sum of future rewards (that we define as return). 

While the proposed framework could be combined with any RL algorithm, benchmarking is out of the scope of this work, for this reason in the rest of the paper we only consider the off-policy soft actor-critic (SAC) algorithm \cite{Haarnoja2018} with continuous action spaces.  

\section{Energy tanks meet RL}
\label{sec:framework}

The arbitrariness of the control input $w$ and its disembodiment from any physical dynamics are tempting motivations to choose it as a decision variable in an optimization framework. Indeed, the output of an RL control policy can be directly mapped to the torques $w$ commanded at an actuator level, as shown in Fig. \ref{fig:mainscheme}. This map can be trivially the identity (in case the generalized forces are directly learned in the RL scheme) or be represented by a transducer (e.g., an internal PID controller that maps a position or velocity reference to the commanded torques $w$), which is often shown to critically improve sample efficiency in the RL framework \cite{Smith2022}. No specific transducer between the action space and $w$ is required to implement the proposed scheme, as long as the commanded torques $w$ are accessible. 
Since MDPs are discrete-time processes, we start by exploiting a simple but powerful result that allows to reproduce the energy tank algorithm at the discrete-time level. 
\subsubsection*{The energy sampling approach}
Let us denote $e_{k}$ the level of energy in the tank at the discrete time step $k$, i.e., $e_k=V_c(kT)$.
The energy in the tank at the next time step, initialized by $e_0$, is updated according to the rule
\begin{equation}
    \label{eq:etank_update}
     e_{k+1} = e_k - \Delta e_{k+1} 
\end{equation}
where $ \Delta e_{k+1}$ is the amount of energy exiting the tank at time $k+1$, which approximates, at a discrete-time level, the energy leaving the tank in a sampling interval with duration $T$, i.e., $\int_{Tk}^{T(k+1)} -\dot{V}_c (s)\,\, ds$. With a look at the continuous energy balance (\ref{eq:tankBalance}), this energy equals the sum of energies used by each actuator of the robot in the time interval $[Tk,T(k+1))$:
\begin{equation*}
    \int_{Tk}^{T(k+1)} -\dot{V}_c (s)  ds= \int_{Tk}^{T(k+1)} w(s)^\top \dot{q} (s)  ds.
\end{equation*}
Now, assuming the torque signal is constant with value $w_k$ along the $k$-th sampling interval, and indicating with $q_k:=q(Tk)$, we can further massage the previous expression to define $\Delta e_{k+1}$ as:
\begin{equation}
\label{eq:defek}
    w_k^\top \int_{Tk}^{T(k+1)}\dot{q}(s)  ds= w_k^\top (q_{k+1}-q_{k})=:\Delta e_{k+1}.
\end{equation}

It is worth remarking that, in the common conditions of constant torque along the sampling time interval and position sensor collocated to the motor, such defined quantity produces the \textit{exact} amount of injected energy by the actuators independently of the value of $T$, and not an estimate. This property makes the connection between the discrete-time RL framework and the energy tank framework particularly appealing since the tank algorithm can exactly be reproduced at a discrete time level without the need of integrating tank dynamics (\ref{eq:tank1}) with the input defined in (\ref{eq:interconnection}).

The definition of $\Delta e_{k+1}$ in (\ref{eq:defek}) exactly reproduces at a discrete time level the tank algorithm as reviewed in Sec. \ref{sec:background}, but this choice is not unique, and variations are possible on the basis of the passivity margin that one wants to achieve for the closed-loop system.
In the sequel of this work, we want to prevent refilling of the tank, and thus use the following update rule
\begin{equation}
\label{eq:deltaenergy}
     \Delta e_{k+1} :=   \begin{cases}
                   w_k^\top (q_{k+1}-q_{k}) & \text{ if }  w_k^\top (q_{k+1}-q_{k}) \geq 0 \\
                   0 & \text{ otherwise }
                \end{cases}
\end{equation}
Such an update rule physically represents the impossibility of recovering negative mechanical energy flowing into the system, which is a reasonable assumption for many mechanical systems. Furthermore, this choice serves best our purpose of interpreting the task energy as a metabolic metric to assess an initial energy budget: the energy in the tank can only decrease along the task execution. 

In this discrete-time implementation of energy tanks, we aim to achieve the following finite difference version of the PBC objective (\ref{eq:passivityEnv}). We indicate $E_k=E(kt)$ the sampled mechanical energy of the robot and $\mathcal{E}_k=E_k+e_k$ the sampled closed-loop storage function.

\begin{tankobjective*}
Given the system (\ref{eq:robot}) with the split (\ref{eq:torqueDist}) inducing power balance (\ref{eq:dissipationsplit}), design a control policy inducing desired control action $\tau_c=w_k$
such that the condition
\begin{equation}
\label{eq:passivityEnvDisc}
    \mathcal{E}_{k+1}-\mathcal{E}_{k}\leq\int_{Tk}^{T(k+1)}\dot{x}(s)^\top F(s) ds
\end{equation}
is verified $\forall k$, i.e., that the sampled difference of closed-loop storage function is bounded by the energy injected in a sampling interval by the s-environment through the port $(F,\dot{x})$.    
\end{tankobjective*}
\begin{theorem*}
    Assuming constant generalized forces $w_k$ in the sampling interval $[Tk,T(k+1)]$, the PBC objective (\ref{eq:passivityEnvDisc}) is achieved using the tank update rule (\ref{eq:deltaenergy}).
\end{theorem*}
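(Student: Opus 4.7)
The plan is to derive the sampled passivity inequality (\ref{eq:passivityEnvDisc}) by combining two ingredients: the continuous-time passivity of the mechanical subsystem integrated over one sampling interval, and the discrete energy balance of the tank under the update rule (\ref{eq:deltaenergy}). The key structural fact that makes the argument work is that, because $\tau_c$ is held constant at $w_k$ on $[Tk,T(k+1))$, the actuator work over the interval collapses to $w_k^\top(q_{k+1}-q_k)$, which is exactly the quantity manipulated by the tank update. This alignment enables a cancellation between the mechanical and tank sides of the balance and produces the desired bound.

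First I would integrate the continuous passivity inequality (\ref{eq:dissipationsplit}) specialized to $\tau_c = w_k$ over $[Tk, T(k+1)]$. Discarding the non-positive dissipation term, pulling $w_k$ out of the integral, and using $q_k := q(Tk)$ yields
\[ E_{k+1} - E_k \leq w_k^\top (q_{k+1}-q_k) + \int_{Tk}^{T(k+1)} \dot{x}(s)^\top F(s)\, ds. \]
Next I would inspect the tank increment $e_{k+1}-e_k = -\Delta e_{k+1}$ in the two branches of (\ref{eq:deltaenergy}). If $w_k^\top(q_{k+1}-q_k) \geq 0$ the increment equals $-w_k^\top(q_{k+1}-q_k)$; if $w_k^\top(q_{k+1}-q_k) < 0$ the tank is frozen and the increment is $0$, which is still bounded above by the strictly positive quantity $-w_k^\top(q_{k+1}-q_k)$. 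Hence in both branches
\[ e_{k+1} - e_k \leq -w_k^\top (q_{k+1}-q_k). \]

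Adding the two inequalities and recalling $\mathcal{E}_k = E_k + e_k$, the actuator-work terms cancel and one is left with $\mathcal{E}_{k+1} - \mathcal{E}_k \leq \int_{Tk}^{T(k+1)} \dot{x}(s)^\top F(s)\, ds$, which is exactly (\ref{eq:passivityEnvDisc}). I do not expect any genuine obstacle in the argument, but the point worth being explicit about is the no-refilling branch: it is precisely there that the rule (\ref{eq:deltaenergy}) trades the equality one would have under the symmetric rule (\ref{eq:defek}) for a strict inequality, and this slack is what accommodates, at the discrete level, the physical choice of discarding the mechanical energy returned to the actuators rather than recirculating it into the tank.
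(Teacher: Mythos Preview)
Your proposal is correct and follows essentially the same route as the paper: integrate the continuous power balance over one sampling interval, use the constancy of $w_k$ to reduce the actuator work to $w_k^\top(q_{k+1}-q_k)$, and then show that the tank update (\ref{eq:deltaenergy}) ensures $e_{k+1}-e_k \le -w_k^\top(q_{k+1}-q_k)$ via a two-case check. The only cosmetic difference is that the paper carries the dissipation integral $R_{k+1}:=\int_{Tk}^{T(k+1)}\dot q^\top D(q)\dot q\,ds$ as an explicit equality term and discards it at the end together with the no-refill slack, whereas you absorb it into an inequality at the first step; the substance is identical.
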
 
\begin{proof}
    Define $R_{k+1}:=\int_{Tk}^{T(k+1)}\dot{q}(s)^\top D(q(s)) \dot{q}(s) ds$ the dissipated power by friction in a sampling interval. Using (\ref{eq:etank_update}) and integrating (\ref{eq:dissipationsplit}) in a sampling interval, we compute $\mathcal{E}_{k+1}-\mathcal{E}_{k}=E_{k+1}-E_{k}+e_{k+1}-e_{k}=-R_{k+1}+w_k^\top (q_{k+1}-q_{k})+\int_{Tk}^{T(k+1)}\dot{x}(s)^\top F(s) ds-\Delta e_{k+1}$, where we used the fact that $\tau_c=w_k$ is constant in a sampling interval.
    The final claim (\ref{eq:passivityEnvDisc}) follows from (\ref{eq:deltaenergy}) and $R_{k+1}\geq 0$.
\end{proof}

Note that the update rule (\ref{eq:deltaenergy}) induces a stronger passivity margin with respect to (\ref{eq:defek}) since when $\Delta e_{k+1}=0$ the term $w_k^\top (q_{k+1}-q_{k})$ is negative, and as such acts as dissipation.

We define the \textit{energy spent at step $k$} as the sum of the energies exiting the tank up to that step:
\begin{equation}
\hat{e}_k := \sum_{l=0}^{k-1}\Delta e_{l+1}.
\end{equation}
Since with the update rule (\ref{eq:deltaenergy}) the sequence $\hat{e}_k$ is monotonically non decreasing, the task energy $e^*$ over a task with $N$ time steps can be simply calculated as  
\begin{equation}
\label{eq:taskenergy}
    e^* = \hat{e}_N.
\end{equation}

We describe now two procedures combining the tank and the RL algorithm. The first one aims at passivizing a control policy that was previously learned agnostically to any passive design. The second exploits the tank architecture also in the training phase, and produces by construction a passive learned policy. 

\begin{figure}[t]
    \centering
    \includegraphics[width=0.8\columnwidth]{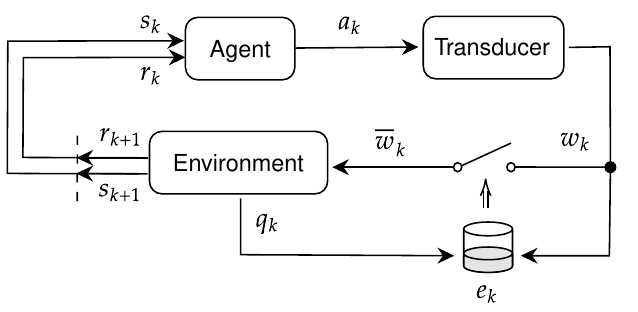}
    \vspace{-1mm}
    \caption{RL with energy tank passivization}
    \label{fig:mainscheme}
    \vspace{-5mm}
\end{figure}

\subsection{ Passivization in Inference (Passivizing learned policies)} \label{sec:passiveinference}

We indicate with the term inference the phase where the training is finished and the agent implements the learnt policy, without exploring anymore. 
The combination of the energy tank architecture and the reinforcement learning technique in inference is quite straightforward, as an RL agent represents in this context an arbitrary controller that the tank algorithm is able to passivize. In fact, we can use any previously trained agent in a generic environment and passivize the controlled system in inference by wrapping the control action with (\ref{eq:tanksCheck}), where the energy in the tank is updated according to (\ref{eq:etank_update}) and (\ref{eq:deltaenergy}). Furthermore, an agent able to fulfill a given task in a generic environment with a specific task energy $e^*$, is also able to passively fulfill that task in an environment where the control action is wrapped with (\ref{eq:tanksCheck}) and the energy tank is initialized with a value greater than $e^*$. This represents indeed the only design requirement to preserve the agent's performance in executing the task with the learnt policy \textit{while} achieving formal passivity (\ref{eq:passivityEnvDisc}).
In order to equip the controlled system with a \textit{meaningful} passivity property, it is important to initialize the tank with an amount of energy that slightly exceeds $e^*$, which can be estimated by running the agent without (\ref{eq:tanksCheck}) for a number of episodes and measuring the maximum consumption of energy along all the episodes. The proposed procedure provides a constructive way to achieve the PBC objective and brings the additional benefit of task energy estimation, which produces a meaningful closed-loop passivization. This step provides a diagnostic tool indicating a non-regular task execution in case of tank depletion, which is a piece of extra information that a naive tank design lacks.

\begin{note*}
It is meaningful to compare the described procedure with the works  \cite{benzi22,Capelli2022PassivityEnergy}, which aim at passivizing an already given control input, and as such can also be seen as a passivization in inference algorithm. In \cite{benzi22,Capelli2022PassivityEnergy} the task energy is not explicitly estimated and as a consequence the initialization of the tank (whose non-depletion constraint in an optimization problem is what produces the passive approximation) is arbitrary. By solving the optimization with different tank initialization, in \cite{benzi22} different passive approximations are achieved. Here instead, motivated by the fact that passivity can be achieved anyhow for any finite task energy and that the learnt policy represents the way the task is optimally executed, we first estimate the task energy to constraint the degree of freedom of tank initialization. \end{note*}  
 
\subsection{Passivization in Training (Learning passive policies)} \label{sec:passivetrain}

In many circumstances, it might be desirable to continuously achieve a passive closed-loop system also in the training phase, i.e., when the task-dependent control policy is being learned. This is desirable for instance when the training takes place in real life, and not in simulations, such that initial exploration phases are guaranteed not to undergo hazardous unbounded energy generation. Furthermore, the passivized training phase endows the agent with awareness of the metabolic spending encoded in the tank architecture, induced by the initial energy budget $e_0$. As a consequence, the learned policy will be influenced by the tank initialization $e_0$, and in particular, the task energy $e^*$ will be directly learned together with the control policy in a combined way, strengthening the significance of the resulting passivity property. This mechanism provides a clear biomimetic perspective to the scheme since the energy budget and the task-dependent policy are not independent variables.

The proposed procedure is achieved by implementing the learning in the extended RL framework depicted in Fig. \ref{fig:mainscheme}., i.e., the standard RL scenario with the tank algorithm embedded. We recognize the following technical issue which needs to be addressed before implementing the training.
\subsubsection*{Loss of Markovian property} 

Notice that the current value of the tank state $e_k$ depends on the entire state-action trajectory of the system from the beginning to step $k$. This is true due to the update rule (\ref{eq:etank_update}), where $\Delta e_{k+1}$ depends both on the joint position $q_k$ (which in most robotics applications is part of the RL state $s_k$) and the action $a_k$.

Furthermore the value $e_k$ is able, due to (\ref{eq:tanksCheck}), to influence the future dynamics of the environment in case of tank depletion.
For these reasons the new environment that the agent perceives by including also the energy tank architecture as in Fig. \ref{fig:mainscheme} does not preserve the memorylessness of the original environment, or in other words it loses the Markov property.

If the Markov property is not satisfied, most RL algorithms lose formal proof of convergence to the optimal policy, and long-term prediction performance can degrade when the one-step predictions defining the Markov property become inaccurate \cite{Whitehead1995}.
We identify two solutions to restore the Markov property.

\subsubsection*{Extended State}
A common practice when the environment is non-Markovian because a relevant time-dependent part of the information is missing from the agent's state but accessible is simply to include it in the state \cite{Whitehead1995}. However, under this formulation, the agent might require a dedicated tuning process to maintain similar performance.  

\subsubsection*{Extended Termination} As an alternative, the episode's termination condition can be extended to the situation in which the energy in the tank depletes.   
In case a robotic platform is involved, terminating the episode would mean breaking the joints, instead of giving zero torque as done in the original formulation. After the termination, the environment is reset and a new episode can start with a restored level of energy in the tank. In this formulation, the Markov property is restored without changing the state and the reward. In fact, the termination of the episode removes the switch behavior (\ref{eq:tanksCheck}), and the agent never meets a state influenced by the free dynamics. 

The reward function has to be non-negative, since the length of an episode is not fixed. 
In fact, using a negative reward, the agent might attempt to maximize the cumulative return by learning actions leading to tank depletion. 


\section{Simulations}
\label{sec:simulations}

\begin{figure}[t]
    \centering 
    \includegraphics[width=.99\columnwidth]{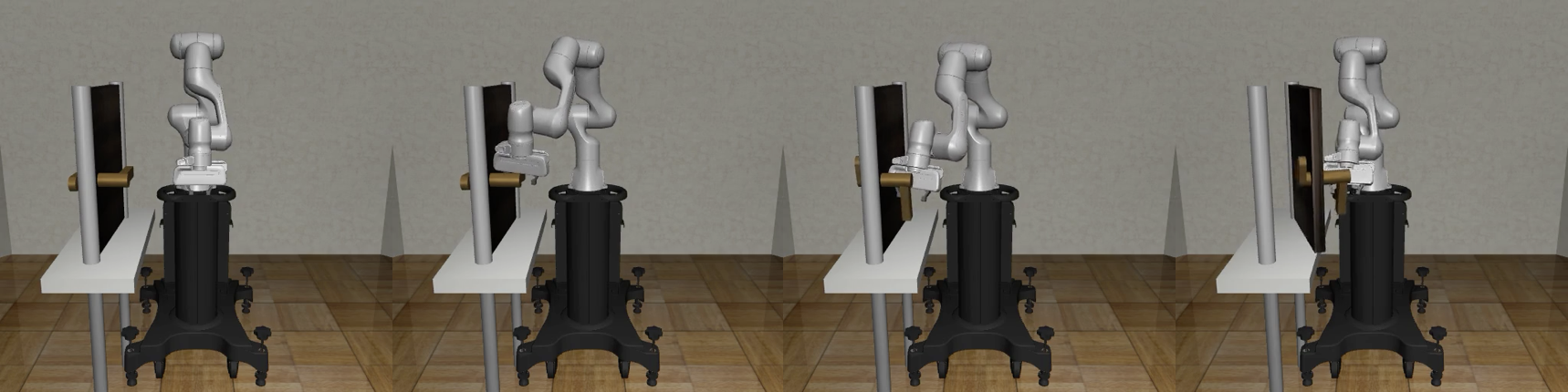}
    \caption{\texttt{DoorOpening} environment}
    \label{fig:envs/dooropening} 
    \vspace{-3mm}
\end{figure}

We consider two environments implemented in MuJoCo physics simulator \cite{Todorov2012MuJoCo:Control}: 
\subsubsection{\texttt{DoorOpening}} Here,  we adopted an  implementation from Robotsuite simulation framework \cite{Zhu2020} where a 7-DoF robotic arm must learn to open a door by turning the handle as shown in Fig. \ref{fig:envs/dooropening}. The door location is randomized at the beginning of each episode. The robot is a Panda Franka Emika which mounts a parallel-jaw gripper equipped with two small finger pads. 
The torques provided to the 7 joints of the robot are generated using a proportional control law $\tau=k_p(\dot{q}_d-\dot{q})$, with $k_p=0.03$, that follow a target joint velocity $\dot{q}_d$ provided by the agent at $50Hz$. 
\subsubsection{\texttt{Pendulum}} In this environment we created a pendulum composed of a rod suspended by one extremity from a pivot actuated by a motor controlled in torque with a control frequency of $50Hz$. The rod is subject to gravity while friction loss is present in the joint.  
Let us denote the joint angle as $\beta$. In the reference configuration $\beta  = 0$ the pendulum heads horizontally right. 
At the beginning of each episode, the angle is randomly initialized $\beta\sim\mathcal{N}(-\pi/2,0.05\pi)$.
The environment state is defined as the vector $s_k = \begin{bmatrix} \sin\beta_k & \cos \beta_k & \tanh \dot{\beta}_k \end{bmatrix}^\top $, while the agent action corresponds to the torque applied to the joint. 
Since the inverted pendulum problem consists of bringing the rod to the inverted position ($\sin\beta^*=1$) and holding it there as long as possible, we design a positive reward function $ r_k  = (1 + \left| \sin\beta_k -1 \right| + 0.1\left| \tanh \dot{\beta}_k \right| + 0.01\left| \tau_k \right|)^{-1}$ which is inversely proportional to a weighted sum of the absolute values of position error ($\sin\beta_k -1$), velocity error ($\tanh \dot{\beta}_k$) and torque ($\tau_k$). These last two components help to remove oscillatory behaviors in the transition phase and at the equilibrium point.

\begin{table}[]
    \caption{Hyperparameters}
    \label{tab:hyperparams}
    \centering
    \begin{tabular}{ccc}
        & {\scriptsize \texttt{DoorOpening}}  & {\scriptsize \texttt{Pendulum}} \\ \hline
        Actor Learning Rate     & 0.01             & 0.005            \\
        Critic Learning Rate    & 0.0005           & 0.005            \\
        Soft Update Coefficient & 0.005            & 0.003            \\
        Batch Size              & 128              & 256              \\
        Number of Epochs        & 2000             & 200              \\
        Steps per Epoch         & 2500             & 2500             \\
        Steps per Trajectory    & 500              & 500              \\
        Steps Before Training   & 3300             & 2500             \\
        Target Update Period    & 5                & 5                \\
        Gradient Steps          & 1000             & 500              \\
        Neurons per Layer       & [256,256]        & [256,256]   
    \end{tabular}
    \vspace{-3mm}
\end{table}

In all the simulations presented in this work the agents are trained using SAC algorithm implemented in PyTorch and trained with an NVIDIA GeForce GTX 1080 Ti on an Intel Core i7-7700K CPU clocked at 4.20GHz.  
In Table \ref{tab:hyperparams} the training configurations for both environments are reported. Additionally, generalized State-Dependent Exploration (gSDE) is employed, where a new noise matrix is sampled every episode. The entropy regularization coefficient is learned automatically as done in  \cite{Haarnoja2018SoftApplications}. 
 
In the rest of this section, we expose the results obtained from the simulations run on the two environments introduced above. In particular, the passivization of a policy learned on the \texttt{DoorOpening} and the learning of a passive policy on \texttt{Pendulum} are discussed respectively in  \ref{sec:simulations/passiveinference} and \ref{sec:simulations/passivetrain}. In Table \ref{tab:notation} the notation adopted for each agent in training and inference is briefly reported. 

\begin{table}[]
\caption{}
\label{tab:notation}
\begin{tabularx}{\columnwidth}{p{0.1\columnwidth}cX}   
\hline
$\phi_\infty$          & $\overset{\underset{\mathrm{\Delta}}{}}{=}$ & agent trained without passivization                                               \\
$\phi_{e_0}$           & $\overset{\underset{\mathrm{\Delta}}{}}{=}$ & agent trained with passivization and tank initialized at $e_0$                    \\
$\phi_{\infty|\infty}$ & $\overset{\underset{\mathrm{\Delta}}{}}{=}$ & agent $\phi_\infty$ in inference without  passivization                           \\
$\phi_{\infty|e^*}$    & $\overset{\underset{\mathrm{\Delta}}{}}{=}$ & agent $\phi_\infty$ in inference with passivization and tank initialized at $e^*$ \\
$\phi_{e_0|e^*}$       & $\overset{\underset{\mathrm{\Delta}}{}}{=}$ & agent $\phi_{e_0}$ in inference with passivization and tank initialized at $e^*$  \\
$\phi_{\infty|\infty,\delta}$ &
  $\overset{\underset{\mathrm{\Delta}}{}}{=}$ &
  agent $\phi_\infty$ in inference with an external force field and without passivization \\
$\phi_{\infty|e^*,\delta}$ &
  $\overset{\underset{\mathrm{\Delta}}{}}{=}$ &
  agent $\phi_\infty$ in inference with an external force field, passivization, and tank initialized at $e^*$ \\ \hline
\end{tabularx}
\end{table}

\subsection{Passivization in Inference}
\label{sec:simulations/passiveinference}

\begin{figure}[t]
    \centering 
    \includegraphics[width=.8\columnwidth]{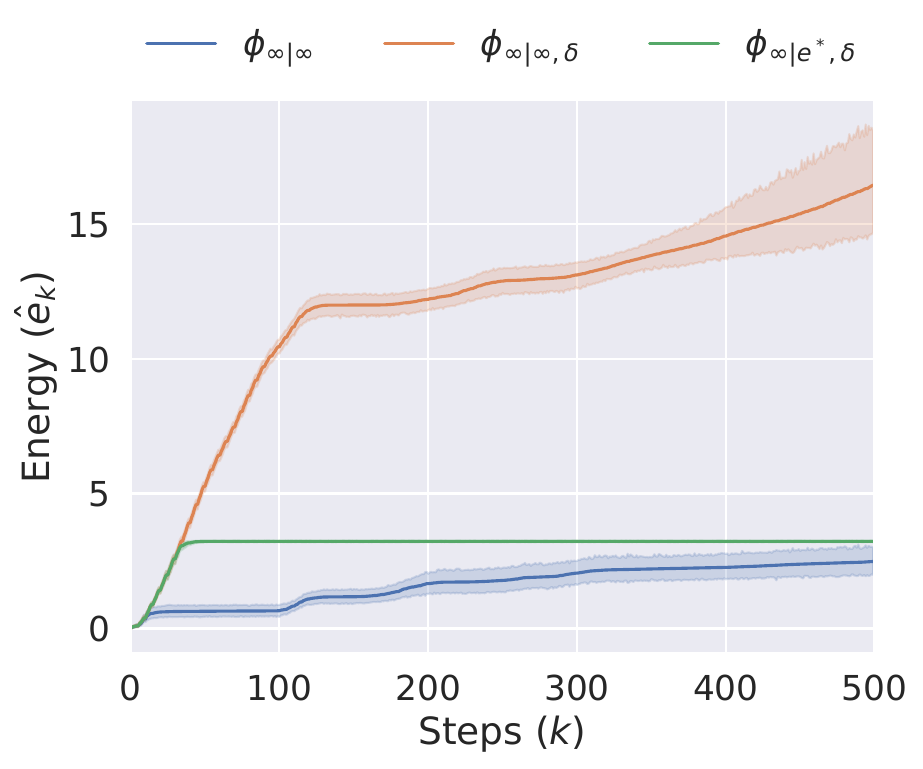}
    \caption{Average energy spent in \texttt{DoorOpening}  in inference. }
    \label{fig:inference_dooropening_wind} 
    \vspace{-3mm}
\end{figure}

To study the effects of passivization in inference, an external field of force is applied to the system as a way to perturb the nominal task. The magnitude of the force is chosen in order to let the tank deplete when it is initialized with an amount of energy corresponding to the task energy measured in the nominal case.  
For this experiment, we consider the \texttt{DoorOpening} environment and we employ a pre-trained model available online\footnote{https://github.com/ARISE-Initiative/robosuite-benchmark}. 
%
As visible in Fig. \ref{fig:inference_dooropening_wind}, under the effect of the external force the agent without the passivization in inference $\phi_{\infty|\infty,\delta}$ can spend an unbounded amount of energy, while the consumption is limited when the same agent is wrapped with passivization $\phi_{\infty|e^*,\delta}$. The energy tank is initialized with the task energy $e^*$ measured as the maximum $\hat{e}_k$ over the $100$ episodes in the agent without the external force and without passivization $\phi_{\infty|\infty}$.  
From this simulation, we can better comprehend and appreciate the versatility of the proposed framework. In fact, an agent trained by a third party to optimize the process in a non-passive way is readily passivized by wrapping the decision variable as detained in \ref{sec:passiveinference} and without the need to retrain/modify the agent.



\subsection{Passivization in Training}
\label{sec:simulations/passivetrain}

\begin{figure*}[t]
    \centering
    \subfloat[Return (training)]{
        \centering
        \includegraphics[width=.5\columnwidth]{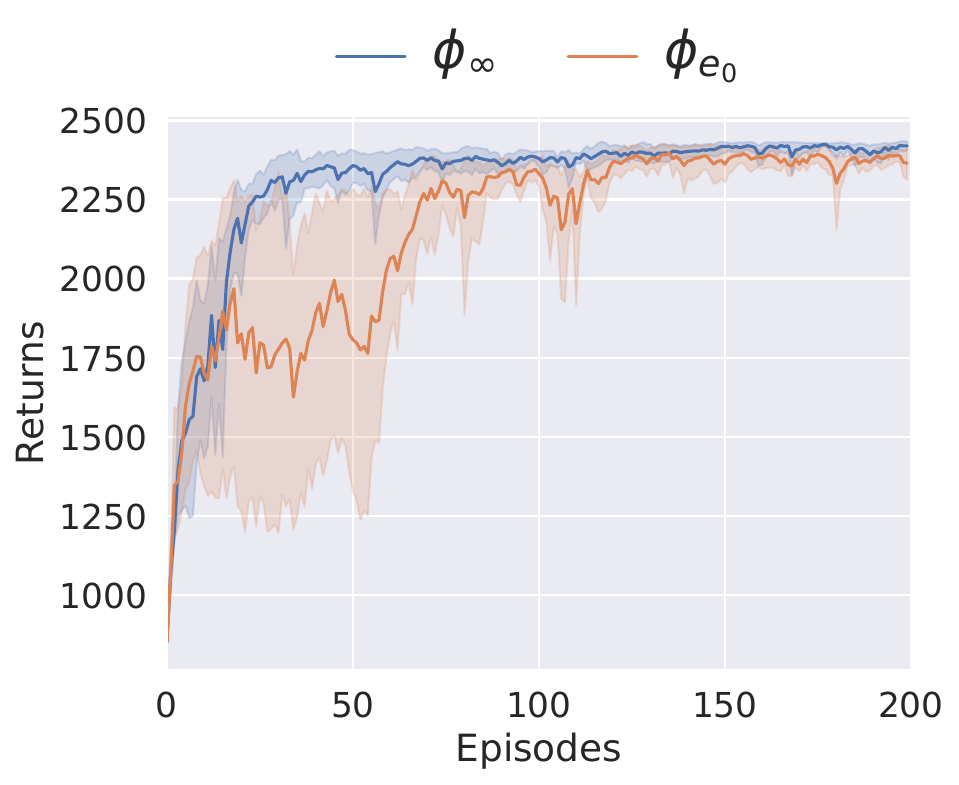}
        \label{fig:training_return_pendulum}
    } \hspace{-5mm}
    \subfloat[Energy (training)]{
        \centering
        \includegraphics[width=.5\columnwidth]{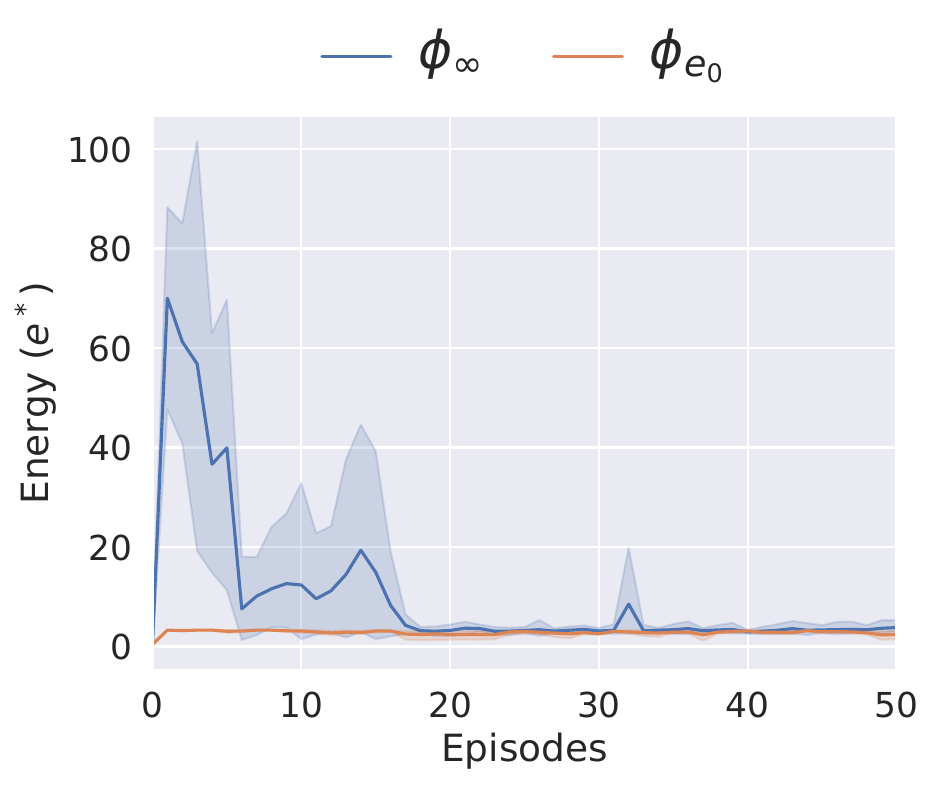}
        \label{fig:training_energy_pendulum}
    }  \hspace{-5mm}
    \subfloat[Error (inference)]{
        \centering
        \includegraphics[width=.5\columnwidth]{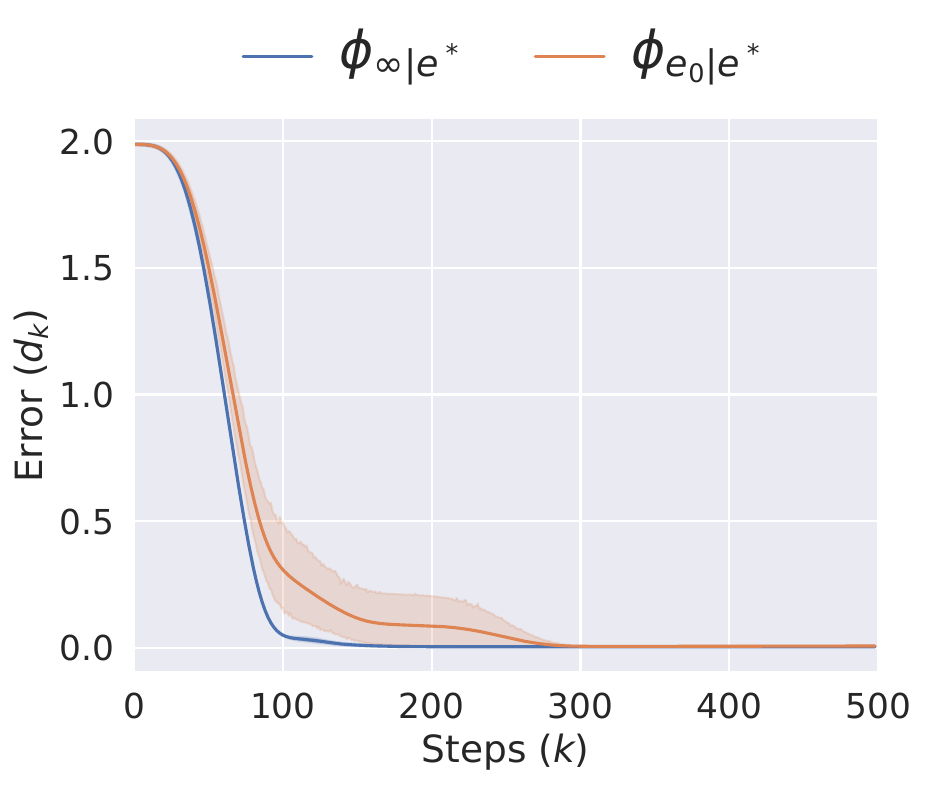}
        \label{fig:inference_error_pendulum}
    } \hspace{-5mm} 
    \subfloat[Energy (inference)]{
        \centering
        \includegraphics[width=.5\columnwidth]{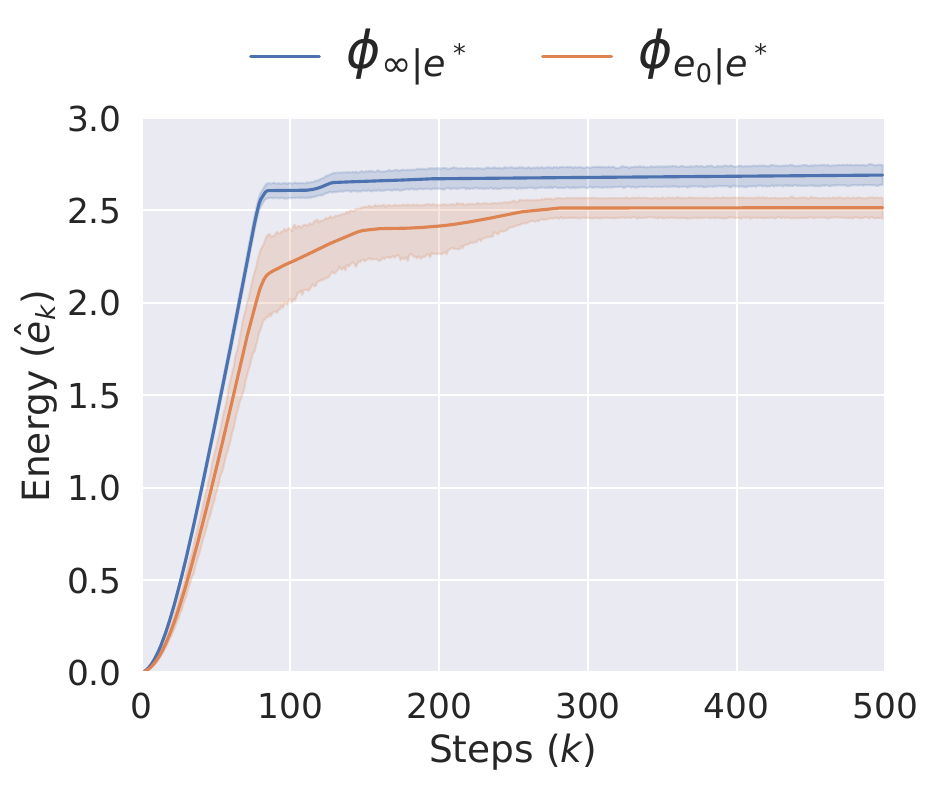}
        \label{fig:inference_energy_pendulum}
    } 
    \caption{In (a) and (b) respectively the average returns obtained and the energy spent in training. In (c) and (d) respectively the average position error  and energy spent in inference. 
    }
    \label{fig:passive_training} 
    \vspace{-3mm}
\end{figure*}

Let us consider two agents $\phi_{\infty}$  and $\phi_{e_0}$ trained in the \texttt{Pendulum} environment sharing the same hyperparameter configuration, but $\phi_{e_0}$ is passivized with the framework of Sec. \ref{sec:passivetrain} and the \textit{Extended Termination} method. 
For each simulation on the \texttt{Pendulum}, 5 instances with different random seeds for the pseudo-random generators are trained. 
Implementing the training with a sufficiently low $e_0$ provides some level of meaningful robust stability in the training phase. Furthermore, we aim to analyze how the energy budget $e_0$ imposed in training influences the learned policy and the task energy $e^*$, which are correlated variables.

We choose as $e_0$ the task energy estimated from $\phi_{\infty}$, measured in inference as the maximum $\hat{e}_k$ over $100$ episodes. 
As visible in Fig. \ref{fig:inference_energy_pendulum}, the task energy $e^*$ estimated by $\phi_{e_0}$ is lower than the energy that the system spends to perform the task when using the policy $\phi_{\infty}$. Indeed we appreciate how the agent $\phi_{e_0}$ learns to perform the task with lower energy consumption. 
Furthermore, the average return of the two agents  during the training  visible in Fig. \ref{fig:training_return_pendulum} converges to the same value (almost $2500$), while  $\phi_{\infty}$ can arrive to spend a level of energy that is almost $50$ times greater than the one spent in $\phi_{e_0}$, see Fig. \ref{fig:training_energy_pendulum}. Clearly, the episode terminations caused by the energy constraints (i.e., when the tank depletes), disturb the exploration during the training, which results in a slower convergence to the final policy.   Fig. \ref{fig:inference_error_pendulum} shows a comparison, in inference, of the two agents where a measure of  the position error $d_k = 1-\sin(\beta_k)$ is adopted as a success metric of the task. As visible both the agents converge to the same error, with a smoother trajectory for $\phi_{e_0}$.

 \section{Conclusions and Future Work}
\label{sec:Conclusion}

We introduced a framework merging the energy tank algorithm, used as a tool to passivize arbitrary control schemes, and reinforcement learning, representing the most versatile method to learn control policies along complex tasks. The presented procedures allow us to passivize any control policy, and to learn constructively passive policies, tackling the problem represented by a lack of optimization in the design phase of passive controllers.  
As future work, we are processing a throughout study of energy-aware robotics in the proposed neural framework, exploiting the energy tank architecture to embed metabolic and safety metrics, which depend on physical energy and power flows undergoing the robot. This goes beyond passive designs, which is a necessary step for many important tasks which require continuous energy injection (e.g., periodic locomotion), and because safety and passivity, although sometimes naively considered equivalent, are distinct concepts.


\renewcommand*{\bibfont}{\small}
\printbibliography

\vfill

\end{document}